\newtheorem{assumption}{Assumption}
\newtheorem{theorem}{Theorem}
\newtheorem{proposition}{Proposition}
\def\BibTeX{{\rm B\kern-.05em{\sc i\kern-.025em b}\kern-.08em
    T\kern-.1667em\lower.7ex\hbox{E}\kern-.125emX}}
\begin{document}
\title{Track-centric Iterative Learning for Global Trajectory Optimization in Autonomous Racing}

\author{Youngim Nam, Jungbin Kim, Kyungtae Kang, and Cheolhyeon Kwon}

\maketitle
\begin{abstract}
This paper presents a global trajectory optimization framework for minimizing lap time in autonomous racing under uncertain vehicle dynamics. Optimizing the trajectory over the full racing horizon is computationally expensive, and tracking such a trajectory in the real world hardly assures global optimality due to uncertain dynamics. Yet, existing work mostly focuses on dynamics learning at the tracking level, without updating the trajectory itself to account for the learned dynamics. To address these challenges, we propose a track-centric approach that directly learns and optimizes the full-horizon trajectory. We first represent trajectories through a track-agnostic parametric space in light of the wavelet transform. This space is then efficiently explored using Bayesian optimization, where the lap time of each candidate is evaluated by running simulations with the learned dynamics. This optimization is embedded in an iterative learning framework, where the optimized trajectory is deployed to collect real-world data for updating the dynamics, progressively refining the trajectory over the iterations. The effectiveness of the proposed framework is validated through simulations and real-world experiments, demonstrating lap time improvement of up to $20.7\%$ over a nominal baseline and consistently outperforming state-of-the-art methods.

\end{abstract}

\begin{IEEEkeywords}
Trajectory Optimization, Autonomous Racing, Bayesian Optimization, Residual Dynamics Learning
\end{IEEEkeywords}

\section{Introduction}
\IEEEPARstart{I}{n} recent years, autonomous racing has gained significant attention as a promising means to showcase cutting-edge autonomous driving technologies, such as Roborace, Indy Autonomous Challenge, and F1tenth \cite{survey_racing}. 
The primary objective of autonomous racing is to minimize lap time while ensuring safety at the limits of handling. This can be formulated as a full-horizon trajectory optimization problem subject to nonlinear vehicle dynamics and track geometry constraints. 
However, this well-posed objective entails twofold challenges: (i) the long-horizon optimization coupled with nonlinear constraints makes the problem computationally intractable \cite{computation_issue}; and (ii) the considered nominal vehicle dynamics may not fully capture the true dynamics in the real world, so any model mismatch can result in a gap from the ideal lap time \cite{survey_model}. 
Most existing studies have tackled one of these challenges, often compromising each other \cite{Planning_mincurv, Control_onlinegp}.

To ease the computational complexity of the full-horizon trajectory optimization, many studies have simplified its formulation, either by decomposing the optimization into multiple sub-optimizations, such as optimizing path and then velocity \cite{Planning_twostage}, or by reducing the dimension of the problem via trajectory parameterization \cite{Planning_bayesian}.
However, they have often resorted to simplified dynamics for computational efficiency, which fail to address the true dynamics in the real world (See Fig.~\ref{fig:conceptart}(a)).
To better account for realistic dynamics, several studies have recast the problem to a receding horizon fashion in the spirit of Model Predictive Control (MPC) \cite{Control_mintime, Control_mintime2}. 
Nonetheless, the performance of MPC ultimately depends on the accuracy of its dynamics.
Therefore, the mismatch between the MPC dynamics and the true dynamics leads the actual trajectory to deviate from the predicted one, making the optimal lap time unattainable (See Fig.~\ref{fig:conceptart}(b)).

\begin{figure}[!t]
    \centering
\includegraphics[width=0.45\textwidth]{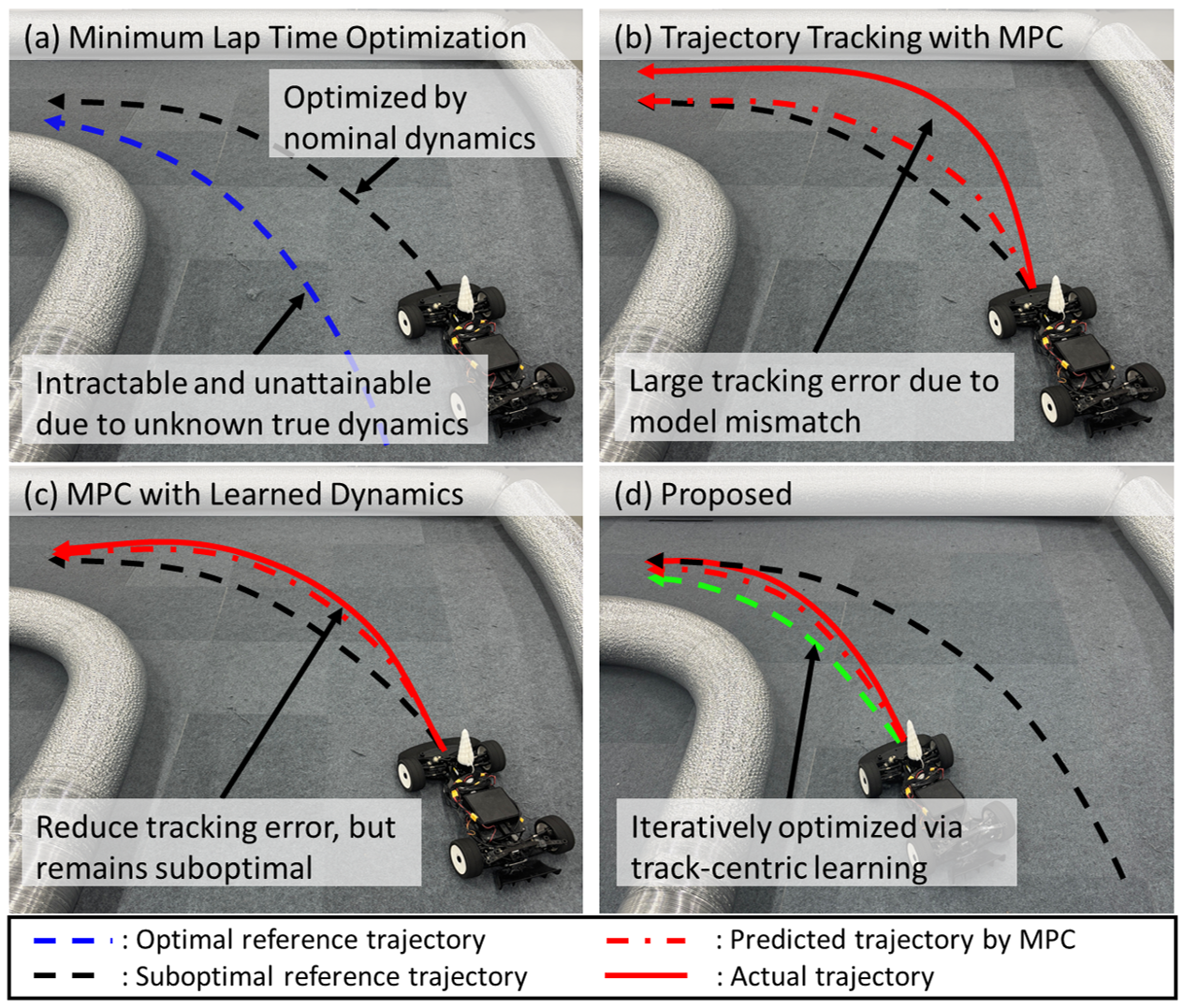}
    \caption{(a) Trajectory optimization with an inaccurate nominal model yields a suboptimal trajectory; (b) Standard MPC exhibits large tracking errors due to model mismatch; (c) MPC with learned dynamics reduces tracking error but follows a suboptimal reference; and (d) Our proposed framework iteratively refines the trajectory to improve the lap time.}
    \label{fig:conceptart}
\end{figure}

To deal with uncertain dynamics, prior research has primarily focused on compensating for the model mismatch at the control level (i.e., reference trajectory tracking). 
Data-driven methods such as Gaussian Process (GP) regression \cite{Control_gp} and Neural Networks (NNs) \cite{Control_NN} learn the residual dynamics between the nominal and true vehicle dynamics to correct prediction errors.
However, as the reference trajectory to be tracked is still generated from the nominal dynamics, the resulting vehicle behavior remains suboptimal in the global planning sense (See Fig.~\ref{fig:conceptart}(c)).
Recently, \cite{doublegp} has proposed an iterative framework that updates dynamics and applies it to both trajectory optimization and tracking.
Yet, this proposed method is incapable of directly accounting for the learned dynamics into the trajectory optimization.
The learned dynamics is evaluated prior to optimization, which is ineffectual for minimizing the real lap time.

Motivated by the aforementioned limitations, we propose a new learning paradigm for minimum lap time trajectory optimization under uncertain dynamics, namely a track-centric learning: \emph{unlike existing work that focuses on learning vehicle dynamics, i.e., dynamics-centric learning, we directly learn and optimize the trajectory for a given race track.}
This paradigm is particularly advantageous when collecting real-world data is costly or unsafe. 
Instead of learning the dynamics across the entire driving envelope, our approach focuses on refining the dynamics along the trajectory being optimized, specific to the given track.
Notably, it requires far fewer real-world trials while still capturing the behavior most relevant to lap time performance.

To this end, our framework consists of two key components.
We first introduce a wavelet-based parameterization to represent trajectories in a low-dimensional space.
Wavelets were originally developed to effectively modulate time signals with both global trends and local variations, a property that has recently led to their adoption in robotics for representing complex trajectories \cite{wavelet, wavelet2}.
Unlike conventional curve parameterizations (e.g., splines or polynomials) that impose rigid geometric structures, wavelets offer a flexible parameterization well-suited for capturing the mixed smooth and abrupt variations in racing tracks.
Second, we employ Bayesian Optimization (BO) to explore the wavelet parameter space through a track-centric surrogate model that maps trajectory parameters to expected lap time.
This surrogate model is evaluated through simulations, where each candidate trajectory is run on the learned vehicle dynamics.

Based on the wavelet-based trajectory parameterization and the track-centric surrogate model, we establish an iterative learning framework that progressively refines the trajectory: (i) the vehicle is driven on the real race track to collect new data; (ii) the vehicle dynamics is updated to improve simulation fidelity; and (iii) trajectory optimization is performed using BO within the simulation-enabled surrogate model.
As iterations proceed, the learned dynamics become more accurate along the trajectory being optimized, enabling more reliable lap time evaluation for trajectory refinement.
The major contributions are summarized as follows: 
\begin{itemize}[noitemsep]
   \item An iterative learning framework is developed to conduct data collection, dynamics update, and trajectory optimization, all centered on the lap time performance of a given race track;
   \item A wavelet-based track-agnostic parameterization of the trajectory is introduced to enable low-dimensional and computationally efficient trajectory optimization;
   \item A track-centric surrogate model is established to encompass learned dynamics, whereby evaluating the lap time directly from the trajectory; 
   \item Theoretical analysis is carried out to characterize error reduction over the learning iterations, offering a formal guarantee of performance improvement under the proposed framework; and
   \item The effectiveness of the proposed framework is validated through extensive simulations and real-world experiments, demonstrating faster lap time compared to existing methods.
\end{itemize}

\section{Related Work}
\subsection{Trajectory Optimization}
Early works on full-horizon trajectory optimization have primarily focused on reducing its computational burden through simplified formulations.
One common strategy is to decompose the original lap time minimization into multiple sub-optimizations \cite{Planning_twostage}, where a geometric path is first optimized based on the proxy objectives such as minimum curvature \cite {Planning_mincurv, Planning_mincurv2}, and the corresponding velocity profile is then optimized along the path \cite{Planning_velprofile}.
Another approach is trajectory parameterization, in which the trajectory is represented using a finite set of parameters (e.g., lateral offsets along a set of predefined waypoints on the track) \cite{Planning_bayesian}.
This representation reduces the dimension of the search space, enabling efficient exploration for black-box optimizers such as BO or genetic algorithms \cite{Planning_GA}.
Yet, to maintain computational efficiency, these methods often rely on kinematic or quasi-dynamic models, since incorporating high-fidelity dynamics into the full-horizon trajectory optimization is computationally unaffordable.
To accommodate the high fidelity vehicle dynamics in the trajectory optimization, several works have adopted MPC to reformulate the problem into a receding horizon fashion \cite{Control_MPC, Control_mintime3}.
However, the limited horizon of MPC inherently restricts the solution to be locally optimal.
More importantly, the potential model mismatch issue still persists in the vehicle dynamics. 
Without fully capturing the true vehicle dynamics, MPC falls short of the optimal trajectory even in local aspect.

\subsection{Data-driven Approaches for Model Mismatch}
Most existing studies addressing model mismatch are grounded in data-driven regime, usually applied at the controller level.
A common approach is to use GP \cite{Control_onlinegp} or NNs \cite{Control_NN} to learn residual dynamics, which can be incorporated into the controller (e.g., MPC) \cite{Control_gp}.
Instead of explicitly learning residual dynamics, some approaches have implicitly addressed model mismatch by adapting controller parameters, such as the cost function or constraints, using collected data.
For instance, learning-based MPC frameworks construct data-driven safe sets and terminal costs \cite{Control_NMPC, Control_NMPC2}, while BO is employed to tune cost weights or constraint margins \cite{Control_BO, Control_BO2}.
Despite these advancements, control actions are still subject to receding-horizon MPC, and thus the resulting trajectory remains suboptimal.

Another group of studies has explored the model-free approach, such as Reinforcement Learning (RL).
RL bypasses the need for an explicit dynamics model by directly learning control actions through interaction with the environment.
However, without a world model, the policy must implicitly reason about the entire lap directly from local observations, which is inherently sample-inefficient.
To address this challenge, recent works employ trajectory-conditioned RL, where the learning target is recast to trajectory tracking, demonstrating significantly improved performance over plain RL \cite{RL}.
Taken together, these developments highlight the importance of explicitly representing trajectories for better performance.
Yet, existing approaches treat the trajectory as a given reference and do not refine it while learning.
To the best of our knowledge, \cite{doublegp} is the only work that considers learned vehicle dynamics in full-horizon trajectory optimization.
However, directly integrating GP-based residual dynamics into a full-horizon optimization process is computationally prohibitive, if not intractable, and is therefore evaluated only prior to optimization.
In sum, due to all the aforementioned limitations in the existing works, \emph{computationally tractable} full-horizon trajectory optimization under \emph{uncertain vehicle dynamics} remains a largely open problem.

\section{Problem Formulation}

\subsection{Vehicle Dynamics}
We consider a bicycle model expressed in the Frenet frame with respect to the track centerline \cite{Control_mintime}, as illustrated in Fig.~\ref{fig:dyn}. 
The vehicle state is defined as $\mathbf{x} := [s, e_y, e_{\psi}, v_{x}, v_{y}, w ]^{\!\top} \in \mathbb{R}^{n_x}$
where $s$ is the arc-length along the track centerline, $e_y$ is the lateral offset, $e_\psi$ is the heading error, $v_x$ and $v_y$ are the longitudinal and lateral velocities in the body frame, and $w$ is the yaw rate, respectively. 
The control input is defined as $\mathbf{u} := [a, \delta ]^{\!\top} \in \mathbb{R}^{n_u}$ where $a$ and $\delta$ are the longitudinal acceleration and steering angle. 
The resulting continuous-time nonlinear equations are given by:
\begin{subequations} \label{eq.continuous_dyn}
\begin{align}
&\dot{s} = \frac{v_x \cos(e_\psi) - v_y \sin(e_\psi)}{1-k(s)e_y} \label{eq.continuous_dyn_a}\\
&\dot{e}_{y} = v_x \sin(e_\psi) + v_y \cos(e_{\psi}) \\
&\dot{e}_{\psi} = w - \frac{v_x \cos(e_{\psi}) - v_y \sin(e_{\psi})}{1-k(s)e_y}k(s) \\
&\dot{v}_x = a - \frac{1}{m}(F_{yf}\sin(\delta) - m w v_y) \\
&\dot{v}_y = \frac{1}{m}( F_{yf} \cos(\delta) + F_{yr}-m wv_x) \\
&\dot{w} = \frac{1}{I_z}(L_f F_{yf} cos(\delta) - F_{yr}L_r)
\end{align}
\end{subequations}
where $m$ and $I_z$ respectively represent the vehicle mass and yaw moment of inertia. 
$L_f$ and $L_r$ are the distances from the center of gravity to the front and rear wheels, and $k(s)$ describes the curvature of the centerline along $s$. 
The lateral tire forces $F_{yf}$ and $F_{yr}$ are modeled using a simplified Pacejka tire model \cite{pacejka}, parameterized by the stiffness factor $B$, shape factor $C$, and friction coefficient $\mu$. 
In compact form, the continuous-time nominal dynamics can be expressed as:
\begin{equation} \label{eq.nominal_dyn_c}
\dot{\mathbf{x}} = f(\mathbf{x}, \mathbf{u}).
\end{equation}
\begin{figure}[!t]
    \centering
    \includegraphics[width=0.45\textwidth]{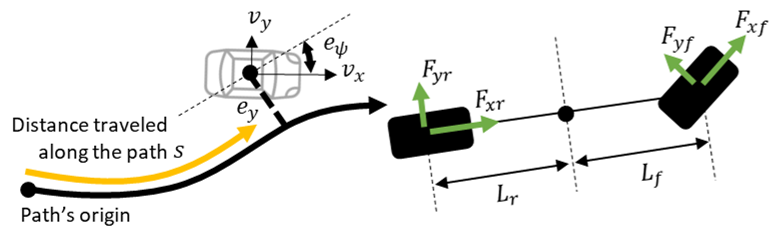}
    \caption{Schematic of the bicycle model described in the Frenet frame.}
    \label{fig:dyn}
\end{figure}
\subsection{Minimum Lap Time Optimization Problem} \label{section:Mintime}
The main goal in autonomous racing is to complete the track as fast as possible, i.e., to minimize the total lap time $T$.
To explicitly handle track boundaries and reduce the problem dimensionality, the optimization problem is formulated over the spatial coordinate $s$ rather than time $t$ \cite{Control_MPC}, i.e., $T = \int \mathrm{d}t = \int \frac{\mathrm{d}t}{\mathrm{d}s} \mathrm{d}s$.
By discretizing the track into $N_s$ arc-length segments, this problem is formulated subject to vehicle dynamics and track constraints as follows:
\begin{subequations} \label{eq.laptime_opt}
\begin{align}
\min_{\{\mathbf{x}_k, \mathbf{u}_k\}_{k=0}^{N_s-1}} \quad & T = \sum_{k=0}^{N_s-1} \frac{1}{\dot{s}_k} \Delta s \label{eq.laptime_obj} \\
\text{s.t.} \quad & \mathbf{x}_{k+1} = f_s(\mathbf{x}_k, \mathbf{u}_k) \label{eq.dyn_constraints} \\
& \mathbf{x}_1 = \mathbf{x}_{N_s} \\
& \mathbf{x}_k \in \mathcal{X} \\
& \mathbf{u}_k \in \mathcal{U}, \quad \forall k = 0, \ldots, N_s-1.
\end{align}
\end{subequations}
where $\mathcal{X}$ and $\mathcal{U}$ are the feasible state and input sets, respectively. $\Delta s$ is the arc-length discretization step, and 
$f_s(\cdot)$ represents the discrete-space dynamics, numerically integrated from \eqref{eq.nominal_dyn_c} over $\Delta s$.
Notably, the solution to \eqref{eq.laptime_opt} is inherently sensitive to model mismatch in vehicle dynamics, resulting in suboptimal trajectories exhibiting longer lap time.

\subsection{Gaussian Process-based Residual Dynamics Learning} \label{section_GP}
To compensate for such model mismatch between the nominal model \eqref{eq.nominal_dyn_c} and the true vehicle dynamics, we model the residual dynamics using GP regression.
\eqref{eq.continuous_dyn} indicates that  the velocity-related states $v_x, v_y$, and $w$ are directly influenced by uncertain parameters such as tire forces.
Accordingly, the residual term is defined to capture the unmodeled velocity-related dynamics, and the regression features are selected as $\mathbf{z} := [v_x, v_y, w, a, \delta]$ \cite{Control_NMPC2}.
The residual dynamics is modeled as a zero-mean GP prior:
\begin{equation} \label{eq.gp_prior}
 \mathbf{y}=g(\mathbf{z}) \sim \mathcal{GP}(0, k(\mathbf{z}, \mathbf{z}'))   
\end{equation}
where $k(\mathbf{z}, \mathbf{z}')$ is a kernel function defining the covariance, designed as a standard Radial Basis Function (RBF) kernel.

Given a set of $N_D$ collected state-input samples $\{\mathbf{x}_i, \mathbf{u}_i\}$ and their corresponding state derivative measures ${\dot{\mathbf{x}}}_{i}^{\text{true}}$, the GP training dataset $\mathcal{D}_g = \{\mathbf{z}_i, \mathbf{y}_i\}_{i=1}^{N_D}$ is acquired. 
Each target residual $\mathbf{y}_i$ is obtained from the discrepancy between the measured and nominal state:
\begin{equation}
    \mathbf{y}_i = S_v\!\left({\dot{\mathbf{x}}}_i^{\text{true}} - f(\mathbf{x}_{i},\mathbf{u}_{i})  \right)
\end{equation}
where $S_v: \mathbb{R}^{n_x} \rightarrow \mathbb{R}^{3}$ is a linear map that extracts $({v}_x, {v}_y, w)$ from $\mathbf{x}$.
Accordingly, the GP prior defined in \eqref{eq.gp_prior} is trained on $\mathcal{D}_g$, resulting in the posterior mean $\mu_g(\mathbf{z})$ and variance $\sigma_g^2(\mathbf{z})$.
By incorporating the learned residual dynamics into \eqref{eq.nominal_dyn_c}, the augmented vehicle dynamics can be represented as:
\begin{equation} \label{eq.gp_model}
    \dot{\mathbf{x}} = f(\mathbf{x}, \mathbf{u}) + B_g g(\mathbf{z})
\end{equation}
where $B_g \in \mathbb{R}^{n_x \times 3}$ is the corresponding injection matrix that maps the predicted residuals $\mathbf{y}$ 
to the velocity-related state derivatives, satisfying the consistency condition $S_v B_g = I_3$.
However, directly substituting \eqref{eq.gp_model} into the minimum lap time optimization \eqref{eq.laptime_opt} yields a stochastic and computationally intractable problem due to the high-dimensional state-input space over full-horizon steps and the non-parametric nature of GP.
To address this, we reformulate the trajectory optimization using a wavelet-based parameterization to reduce the search dimension of problem; and employ BO over a track-centric surrogate model to solve for the optimal global trajectory under uncertain vehicle dynamics.
\vspace{-0.5em}
\section{Algorithm Development}
\begin{figure}[!t]
    \centering
\includegraphics[width=0.45\textwidth]{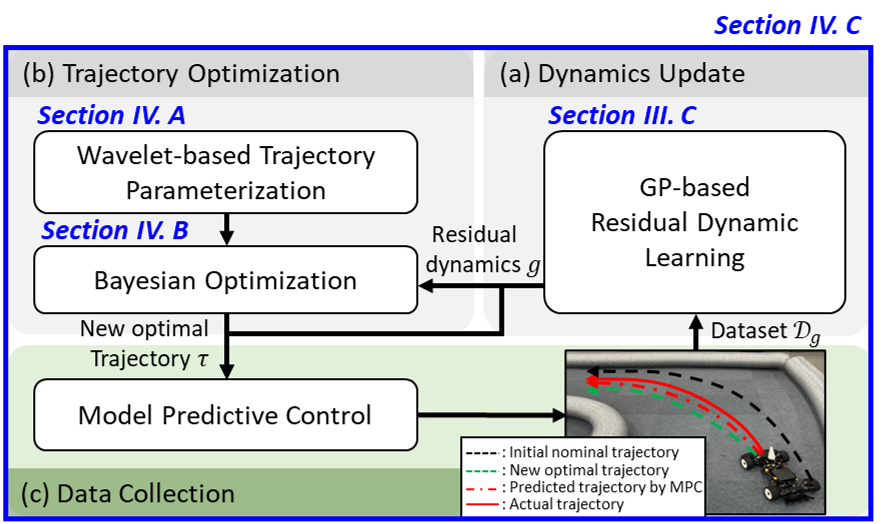}
    \caption{Overall architecture of iterative track-centric learning framework.}
    \label{fig:algorithm}
\end{figure}
\subsection{Wavelet-based Trajectory Parameterization} \label{section:wavelet}
To make the global trajectory optimization computationally affordable, we represent the trajectory $\tau$ by a low-dimensional parameter vector $\theta$ in light of wavelet transformation.
Specifically, we parameterize two key profiles of trajectory\footnote{$e_y$ and $v_x$ serve as the principal design variables of racing trajectory, while all other vehicle states (e.g., $e_\psi, v_y, w$) are physically subject to these two variables in view of the vehicle dynamics \cite{survey_model}.}: the lateral deviation from the track centerline $e_y(s)$ and the longitudinal velocity profile $v_x(s)$ along the arc-length $s$.
While the trend of these profiles varies significantly across different race tracks, conventional curve parameterization techniques often require manual tuning, such as model order or control point density, to capture such trend.
More fundamentally, under a limited number of parameters, these parameterizations have limited expressivity that can fail to represent the optimal trajectory, critically near apexes.
In contrast, wavelets enable a flexible representation of trajectory through a multi-resolution decomposition, as illustrated in Fig. \ref{fig:wavelet}.
Specifically, each profile is decomposed into approximation coefficients $c$, which capture the global structure, and detail coefficients $d$, which encode localized variations and higher-frequency terms \cite{wavelet}: 
\begin{subequations}
\begin{align}
e_y(s) &= \sum_{k} c^{e_y}_{L,k}\, \phi_{L,k}(s) + \sum_{l=0}^{L-1}\sum_{k} d^{e_y}_{l,k}\, \psi_{l,k}(s)\\
v_x(s) &= \sum_{k} c^{v_x}_{L,k}\, \phi_{L,k}(s) + \sum_{l=0}^{L-1}\sum_{k} d^{v_x}_{l,k}\, \psi_{l,k}(s)
\end{align}
\end{subequations}
where $\phi_{L,k}$ and $\psi_{l,k}$ respectively denote the scaling and wavelet basis functions.
$L$ represents the maximum decomposition level, and $k$ indexes the translation of each basis function within that level.

Since the trajectory is defined over discrete arc-length samples $s = \{s_0, s_1, \cdots, s_{N_s-1}\}$, the wavelet coefficients $(c,d)$ are obtained by applying Discrete Wavelet Transform (DWT) to the discretized $e_y$ and $v_x$ profiles.
At the initialization step, we apply DWT to the nominal trajectory $\tau^0$ obtained by solving \eqref{eq.laptime_opt}, yielding the initial coefficient sets $(c^0, d^0)$.
Among these coefficients, the only coarsest-level approximation coefficients are selected as optimization variables, while the other associated detail coefficients are kept fixed at their initial values throughout the optimizations. 
Such selection is motivated by the fact that lap time performance is largely influenced by the global trend of the racing trajectory\footnote{High-frequency trajectory variations tend to induce large curvature peaks, which increase lap time and violate tire forces and jerk limits \cite{Planning_mincurv}.}.
Correspondingly, the selected coefficient vectors $c^{e_y}_{L,k}$ and $c^{v_x}_{L,k}$ are concatenated to form:
\begin{equation}\label{eq:theta_vec}
\theta =
\begin{bmatrix}
c^{e_y}_{L,1}, \dots, c^{e_y}_{L,N_{e_y}},\;
c^{v_x}_{L,1}, \dots, c^{v_x}_{L,N_{v_x}}
\end{bmatrix}^{\!\top}
\in \mathbb{R}^{N_\theta}
\end{equation}
where $N_{e_y}$ and $N_{v_x}$ denote the numbers of retained coefficients at the coarsest level for $e_y$ and $v_x$, respectively.
This parameterization provides a low-dimensional search space while preserving the qualification of the trajectory.

\begin{figure}[!t]
    \centering
    \includegraphics[width=0.48\textwidth]{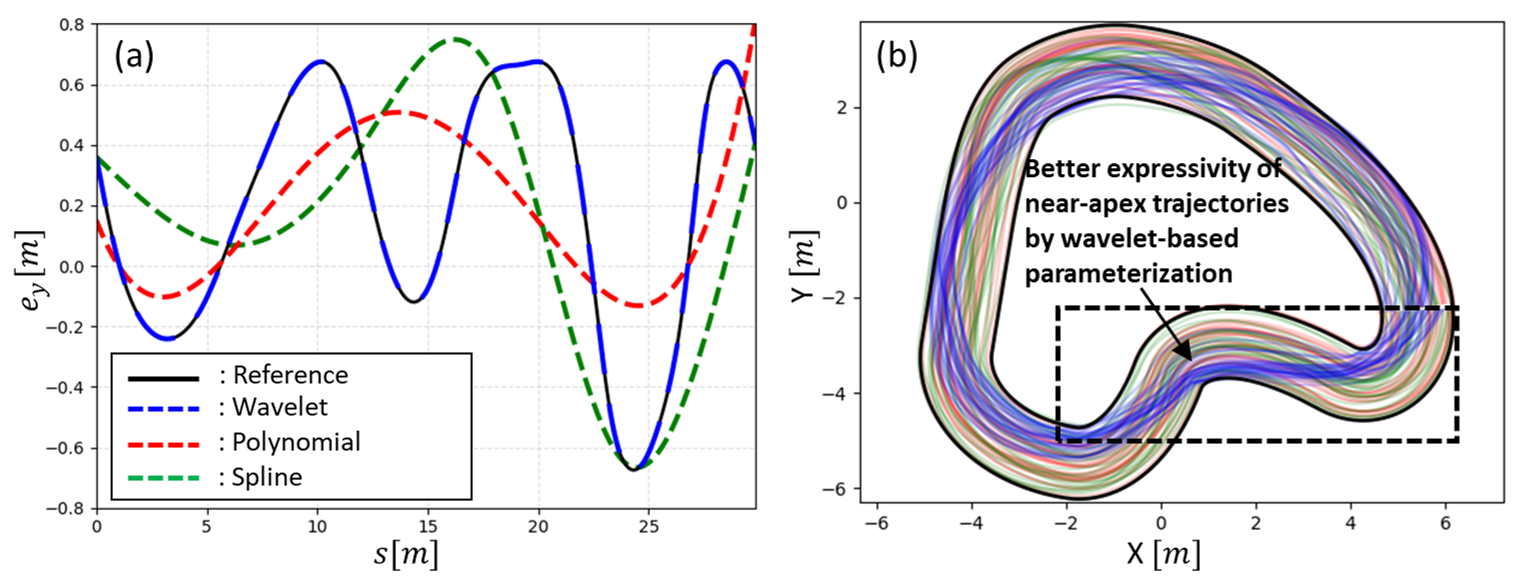}
    \caption{Expressivity gap in conventional vs. wavelet-based trajectory parameterizations: under a limited parameter number, (a) wavelet-based parameterization can accurately capture different trajectory trends; and (b) can represent rich samples near the optimal trajectory.}
    \label{fig:wavelet}
\end{figure}

\subsection{Bayesian Optimization for Lap Time Minimization} \label{section_BO}
Within the wavelet-based parameterization, the lap time minimization problem can be reformulated as:
\begin{equation}
    \theta^{*} = \arg\min_{\theta \in \Theta} J(\theta)
\end{equation}
where $J(\theta)$ denotes the total lap time associated with $\theta$, and $\Theta$ denotes the feasible set constrained by the track geometry and the limits in vehicle dynamics.
As this evaluation is computationally expensive and does not admit analytic gradients, we treat $J(\theta)$ as a black-box function and employ BO to search for $\theta^*$ in a sample-efficient manner\footnote{Other gradient-free optimization methods (e.g., genetic algorithm, CMA-ES, etc.) can be also used in the proposed framework. However, they are in general hard to show theoretical guarantees, if any \cite{EA}.}.
Each evaluation of $J(\theta)$ involves a closed-loop simulation whereby: (i) the candidate trajectory is reconstructed via inverse DWT from $\theta$; (ii) the vehicle with the learned residual dynamics is simulated to track this candidate trajectory; and (iii) the resulting lap time is measured, all in simulation level.
This process yields a scalar lap time corresponding to $\theta$, which plays as a noisy observation of the underlying objective function.

To model the relationship between $\theta$ and $J(\theta)$ on a given track, BO employs a probabilistic surrogate model $\hat{J}(\theta)$ using GP regression:
\begin{equation}
    \hat{J}(\theta) \sim \mathcal{GP}\big(\mu(\theta), \sigma^2(\theta)\big)
\end{equation}
where $\mu(\theta)$ and $\sigma^2(\theta)$ respectively denote the predicted lap time and its associated model uncertainty.
Based on this track-centric surrogate model, BO proposes and evaluates the candidate parameters over recursive cycles. 
At each recursion $n$, given the current dataset 
$\mathcal{D}_{BO} = \{(\theta_i, J(\theta_i))\}_{i=1}^{n}$, 
the GP posterior is updated to produce the predictive mean $\mu_n(\theta)$ and variance $\sigma_n^2(\theta)$. 
The next candidate trajectory parameter vector $\theta_{n+1}$ is then selected according to the Lower Confidence Bound (LCB) criterion:
\begin{equation} \label{eq:bo_acq}
\theta_{n+1} = \arg\min_{\theta \in \Theta} \big[\mu_n(\theta) - \beta^{1/2}\sigma_n(\theta)\big]
\end{equation}
where $\beta>0$ balances the exploration–exploitation trade-off. 
Once $\theta_{n+1}$ is obtained, its corresponding lap time $J(\theta_{n+1})$ is evaluated through the closed-loop simulation.
The new observation $(\theta_{n+1}, J(\theta_{n+1}))$ is then incorporated into the dataset $\mathcal{D}_{BO}$. 
This process is repeated for $N_{\text{BO}}$ recursions, after which the parameter achieving the lowest observed lap time among all evaluated candidates is selected.

\subsection{Iterative Track-Centric Learning Framework}
The BO for trajectory optimization is incorporated into an iterative learning framework that progressively refines the trajectory under uncertain vehicle dynamics, as illustrated in Fig.~\ref{fig:algorithm}.
At iteration $j$, the framework seeks an optimal trajectory $\tau^j$, parameterized by $\theta^{j}$, with respect to the learned dynamics $g^j$.
Each execution step proceeds as follows:
\begin{enumerate}
    \item \textbf{Initialization} (Iteration 0): The process is initialized by solving the minimum lap time problem with nominal dynamics (Sec. \ref{section:Mintime}). The resulting trajectory $\tau^0$ is deployed on the real track to collect the dataset $\mathcal{D}_g^0$.
    
    \item \textbf{Iterative Learning Cycle} (Iteration $j \ge 1$): 
    Starting from the wavelet-based representation $\theta^{0}$ of the initial trajectory $\tau^0$, both dynamics $g^j$ and trajectory $\tau^j$ are updated upon new data collection.
    This consists of the following three-phase cycle:
    \begin{itemize}
        \item[(a)] \textbf{Dynamics Update:} The residual dynamics $g^j$ is trained using all data collected up to the previous iteration, denoted by $\mathcal{D}_g^{j-1}$ (Sec.~\ref{section_GP}).
        
        \item[(b)] \textbf{Trajectory Optimization:} 
       Through the simulation that incorporates the updated residual dynamics $g^j$ and a tracking controller\footnote{In our implementation, the MPC with the learned residual dynamics \cite{Control_gp} is used for trajectory tracking, both within the simulation for surrogate model and during on-track data collection.
        Note that our framework is compatible with any choice of tracking controller, as the trajectory optimization is conducted through controller-in-the-loop simulation.}, BO computes a new optimal trajectory parameter vector $\theta^{j}$ (Sec.~\ref{section_BO}).
        
        \item[(c)] \textbf{Data Collection:} 
        The optimized trajectory $\tau^{j}$ corresponding to $\theta^{j}$ is deployed on the real track and collect new data $\mathcal{D}_g^{\text{new}}$.
        These data are aggregated as 
        $\mathcal{D}_g^{j} \leftarrow \mathcal{D}_g^{j-1} \cup \mathcal{D}_g^{\text{new}}$ for the next iteration.
    \end{itemize}
\end{enumerate}

\subsection{Theoretical Analysis}
This section provides a theoretical foundation on the performance guarantee of the proposed framework. 
Specifically, we analytically show an asymptotic bound assuring that the true lap time of the proposed BO trajectory is near-optimal.
Let $g^*$ denote the true residual dynamics and $J_{g^*}(\theta)$ the true lap time evaluated under $g^*$. 
The corresponding optimal lap time within the parameterized trajectory class is $J^*= \min_{\theta \in \Theta} J_{g^*}(\theta)$.
At iteration $j$, $J_{g^j}(\theta)$ denotes the lap time evaluated under the learned residual dynamics $g^j$.
Without loss of generality, the following assumptions are considered.
\begin{assumption}
\label{assum:model_convergence}
As more data are collected, the learned residual dynamics $g^{j}$ converges to the true residual $g^*$, satisfying
$\|g^{j} - g^*\|_{\infty} \leq \epsilon^{j}$,
where $\epsilon^{j} \to 0$ as $j \to \infty$ \cite{hewing2020learning}.
\end{assumption}
\begin{assumption}
\label{assum:bo_bound}
At each iteration $j$, the BO returns a trajectory parameter vector $\theta^j$ such that:
\[
J_{g^j}(\theta^j) \le \min_{\theta \in \Theta} J_{g^j}(\theta) + \delta(N_{BO})
\]
where $\delta(N_{BO})$ denotes a finite-time suboptimality bound of BO that decreases as the number of evaluations $N_{BO}$ increases \cite{BO_assump}.
\end{assumption}

Assumption~\ref{assum:model_convergence} formalizes the convergence of the learned residual dynamics to the true residual dynamics, while Assumption~\ref{assum:bo_bound} indicates the
suboptimality due to a finite number of BO evaluations at each iteration.
Under Assumption~\ref{assum:model_convergence}, we first bound the lap time evaluation error
incurred when $J(\theta)$ is evaluated under $g^j$ instead of $g^*$.
\begin{proposition}
\label{prop:surrogate_fidelity}
For any $\theta \in \Theta$, the lap time evaluation error satisfies: 
\[
|J_{g^j}(\theta) - J_{g^*}(\theta)| \leq C^{j}\epsilon^{j}
\]
where $C^j > 0$ is a bounded constant.
\end{proposition}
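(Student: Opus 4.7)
The plan is to prove this by establishing that the lap-time functional $J_g(\theta)$ is Lipschitz continuous in the residual dynamics $g$, so that an $\epsilon^{j}$ perturbation from $g^{*}$ to $g^{j}$ induces at most an $O(\epsilon^{j})$ change in the evaluated lap time. Fixing $\theta \in \Theta$, I would compare the two closed-loop simulations driven by the same tracking controller and the same $\theta$: one using $g^{*}$ and producing state trajectory $\mathbf{x}^{*}(t)$, the other using $g^{j}$ and producing $\mathbf{x}^{j}(t)$.

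First, I would bound the deviation between these two state trajectories via an ODE perturbation argument. The closed-loop right-hand side $f(\mathbf{x},\mathbf{u}) + B_g g(\mathbf{z})$, composed with a Lipschitz tracking-controller map $\mathbf{u}=\kappa(\mathbf{x},\theta)$, is Lipschitz on the compact feasible sets $\mathcal{X}\times\mathcal{U}$ with some constant $L$, while the only forcing discrepancy between the two dynamics is $B_g(g^{j}-g^{*})$, whose magnitude is at most $\|B_g\|\,\epsilon^{j}$. A direct application of Grönwall's inequality then yields
\[
\sup_{t\in[0,T_{\max}]}\bigl\|\mathbf{x}^{j}(t)-\mathbf{x}^{*}(t)\bigr\| \;\le\; C_{1}^{j}\,\epsilon^{j},
\]
where $T_{\max}$ is a uniform upper bound on the lap time over $\Theta$ and $C_{1}^{j}$ absorbs $\|B_g\|$, $L$, and $T_{\max}$.

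Next, I would propagate this state-trajectory bound to the lap time. Writing the lap time in continuous-space form $T_g(\theta)=\int_{0}^{s_{\text{end}}} \frac{ds}{\dot{s}(\mathbf{x}^{g})}$ and noting that $\dot{s}$ in \eqref{eq.continuous_dyn_a} is a smooth function of the state whose denominator $1-k(s)e_y$ is uniformly bounded away from zero on $\mathcal{X}$, the map $\mathbf{x}\mapsto 1/\dot{s}$ is Lipschitz with some constant $L_s$ on the relevant compact set. Combining this with the Grönwall bound gives
\[
|J_{g^{j}}(\theta) - J_{g^{*}}(\theta)| \;\le\; s_{\text{end}}\,L_s\,C_{1}^{j}\,\epsilon^{j} \;=:\; C^{j}\epsilon^{j},
\]
with $C^{j}$ depending only on the track geometry, the feasible sets, and the (uniformly bounded) Lipschitz constants of $f$, $g^{j}$, and the controller, hence remaining bounded across iterations.

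The hard part will be justifying the uniform Lipschitz and non-degeneracy hypotheses invoked above — in particular, keeping $\dot{s}$ bounded away from zero along both simulated trajectories so that the $1/\dot{s}$ term does not amplify the state perturbation uncontrollably, and ensuring that the learning-based MPC tracking controller remains well-defined and Lipschitz in the state over the compact feasible region. Both issues are handled by restricting attention to feasible $\theta \in \Theta$, for which the constraints in \eqref{eq.laptime_opt} enforce a minimum forward speed, and by appealing to standard input-to-state stability arguments for MPC on compact state sets.
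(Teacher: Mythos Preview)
Your proposal is correct and follows essentially the same two-step structure as the paper's proof: first bound the closed-loop state deviation induced by the residual mismatch $g^{j}-g^{*}$, then propagate that deviation to the lap-time functional via Lipschitz continuity of $1/\dot{s}$ together with a uniform lower bound $v_{\min}$ on the progress rate. The only cosmetic difference is that you invoke Gr\"onwall's inequality in continuous time, whereas the paper works in the discrete arc-length setting, Taylor-linearizes the deviation dynamics, and unrolls a state-transition-matrix sum; these are standard variants of the same ODE-perturbation argument, and your Gr\"onwall route is arguably cleaner since it avoids the first-order $\approx$ the paper leaves in its derivation.
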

\begin{proof}[Proof]
See Appendix \ref{appendix:proof_prop1}.
\end{proof}

Proposition~\ref{prop:surrogate_fidelity} associates the model mismatch error with the lap time evaluation error for any fixed trajectory parameter.
Together with Assumption~\ref{assum:bo_bound} and the convergence $\epsilon^j \to 0$ by
Assumption~\ref{assum:model_convergence}, we can characterize the bound of true lap time
performance of the trajectory parameters returned by BO.
\begin{theorem} \label{theom}
Suppose Assumptions~\ref{assum:model_convergence} 
and~\ref{assum:bo_bound} hold. 
Then, by Proposition~\ref{prop:surrogate_fidelity}, 
the trajectory parameter $\theta^j$ found at iteration $j$ yields a true lap time
$J_{g^*}(\theta^j)$ that is asymptotically bounded within a neighborhood of the optimal lap time:
\[
\limsup_{j \to \infty} J_{g^*}(\theta^{j}) \leq J^{*} + \delta(N_{BO}).
\]
\end{theorem}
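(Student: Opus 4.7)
The plan is to prove the theorem by a standard two-sided comparison that uses the surrogate lap time $J_{g^j}$ as an intermediary between the true lap time $J_{g^*}$ at the BO-returned parameter $\theta^j$ and at the true optimum $\theta^* \in \argmin_{\theta \in \Theta} J_{g^*}(\theta)$. The idea is to convert the optimality of $\theta^j$ with respect to $J_{g^j}$ (given by Assumption~\ref{assum:bo_bound}) into near-optimality with respect to $J_{g^*}$ by paying two model-mismatch penalties quantified by Proposition~\ref{prop:surrogate_fidelity}, and then to let the residual error $\epsilon^j$ vanish via Assumption~\ref{assum:model_convergence}.

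Concretely, I would proceed in the following order. First, apply Proposition~\ref{prop:surrogate_fidelity} at $\theta^j$ to obtain $J_{g^*}(\theta^j) \le J_{g^j}(\theta^j) + C^j \epsilon^j$, thereby lifting the analysis from the true objective to the surrogate at the returned point. Second, invoke Assumption~\ref{assum:bo_bound} to replace $J_{g^j}(\theta^j)$ by $\min_{\theta\in\Theta} J_{g^j}(\theta) + \delta(N_{BO})$. Third, upper-bound $\min_{\theta\in\Theta} J_{g^j}(\theta)$ by $J_{g^j}(\theta^*)$ (feasibility of $\theta^*$), and then apply Proposition~\ref{prop:surrogate_fidelity} a second time at $\theta^*$ to obtain $J_{g^j}(\theta^*) \le J_{g^*}(\theta^*) + C^j \epsilon^j = J^* + C^j \epsilon^j$. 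Chaining these three inequalities yields the non-asymptotic bound
\begin{equation*}
J_{g^*}(\theta^j) \;\le\; J^* + \delta(N_{BO}) + 2\,C^j \epsilon^j.
\end{equation*}
Finally, take $\limsup_{j\to\infty}$ on both sides; since $\epsilon^j \to 0$ by Assumption~\ref{assum:model_convergence} and $C^j$ is uniformly bounded by the hypothesis of Proposition~\ref{prop:surrogate_fidelity}, the residual term $2 C^j \epsilon^j$ vanishes, leaving the claimed bound $J^* + \delta(N_{BO})$.

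The routine part of this argument is the triangle-style chain of inequalities; all the quantitative work is already absorbed into Proposition~\ref{prop:surrogate_fidelity} and Assumption~\ref{assum:bo_bound}. The one delicate point, and the step I would scrutinize most carefully, is the handling of the constant $C^j$: the statement of Proposition~\ref{prop:surrogate_fidelity} only says $C^j$ is ``a bounded constant,'' so for the $\limsup$ step to produce a clean $J^* + \delta(N_{BO})$ (rather than $J^* + \delta(N_{BO}) + \limsup_j 2C^j\epsilon^j$), one must argue that the sequence $\{C^j\}$ is uniformly bounded in $j$, or at least that $C^j \epsilon^j \to 0$. This in turn relies on the trajectory $\theta^j$ remaining in a compact, feasible region of $\Theta$ on which the closed-loop dynamics (with both nominal model and tracking controller) depend Lipschitz-continuously on the residual term, so that the constant extracted in the proof of Proposition~\ref{prop:surrogate_fidelity} does not blow up as iterations proceed. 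I would therefore state this uniform-boundedness explicitly as a mild regularity condition (implicit in Assumption~\ref{assum:model_convergence} and the compactness of $\Theta$), after which the $\limsup$ step is immediate and the theorem follows.
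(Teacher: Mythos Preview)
Your proposal is correct and follows essentially the same route as the paper: apply Proposition~\ref{prop:surrogate_fidelity} at $\theta^j$, invoke Assumption~\ref{assum:bo_bound}, bound $\min_\theta J_{g^j}(\theta)$ via Proposition~\ref{prop:surrogate_fidelity} again (the paper does this uniformly over $\theta$ rather than at a specific minimizer $\theta^*$, but the effect is identical), obtain the non-asymptotic bound $J^* + \delta(N_{BO}) + 2C^j\epsilon^j$, and pass to the $\limsup$. Your explicit discussion of the uniform boundedness of $C^j$ is in fact more careful than the paper, which simply asserts $\epsilon^j\to 0$ suffices.
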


\begin{proof}
From Proposition~\ref{prop:surrogate_fidelity}, we have $J_{g^*}(\theta^{j}) \leq J_{g^j}(\theta^{j}) + C^{j}\epsilon^{j}$. Combining this with Assumption~\ref{assum:bo_bound} yields: 
\begin{equation} \label{eq.proof1}
    J_{g^*}(\theta^{j}) \leq \min_{\theta \in \Theta} J_{g^j}(\theta) + \delta(N_{BO}) + C^{j}\epsilon^{j}.
\end{equation}
Proposition~\ref{prop:surrogate_fidelity} also implies that $J_{g^j}(\theta) \le J_{g^*}(\theta) + C^{j}\epsilon^{j}$ for all $\theta \in \Theta$, and hence:
\begin{equation} \label{eq.proof2}
\min_{\theta \in \Theta} J_{g^j}(\theta) 
\le \min_{\theta \in \Theta} \bigl(J_{g^*}(\theta) + C^{j}\epsilon^{j}\bigr)
= J^* + C^{j}\epsilon^{j}.
\end{equation}
Combining \eqref{eq.proof1} and \eqref{eq.proof2} yields
$J_{g^*}(\theta^{j}) 
\le J^* + 2 C^{j}\epsilon^{j} + \delta(N_{BO}).$
Since $\epsilon^{j} \to 0$ as $j \to \infty$ by 
Assumption~\ref{assum:model_convergence}, we obtain
$
\limsup_{j \to \infty} J_{g^*}(\theta^{j}) \le J^* + \delta(N_{BO}),
$
which completes the proof.
\end{proof}

\section{Experimental Results}
The proposed framework is validated both in simulation and on a real-world 1/10-scale racing platform.
All experiments are conducted from the same initial condition,
$\mathbf{x}_0 = [0\mathrm{m},~0\mathrm{m},~0\mathrm{rad},~ 0\mathrm{m/s},~0\mathrm{m/s},~0\mathrm{rad/s}]^{\!\top},$ and lap time is measured on the second lap to exclude the effects of the initial acceleration phase.
The nominal vehicle dynamics follows the nonlinear bicycle model in \eqref{eq.continuous_dyn}, with parameters set to $B = 1.3, ~ C = 1.5, ~ \mu = 1.2, ~ I_z = 0.024\mathrm{kg\!\cdot\!m^2}, ~ m = 3.0\mathrm{kg}, ~ L_f = 0.14\mathrm{m}, ~ L_r = 0.14\mathrm{m},$ and $g = 9.81\mathrm{m/s^2}$.
The trajectory is discretized into $N_s = 256$ and parameterized using a Daubechies-4 (db4) wavelet with a decomposition level of $L=6$.
This yields $N_{\theta} = 10$ trajectory parameters ($N_{e_y}=5, ~ N_{v_x}=5$), which are optimized via $N_{BO} = 70$ BO recursions.
The proposed algorithm is implemented in Python, and \texttt{FORCESPRO} \cite{forcespro} is used to run the MPC tracking controller at a sampling time of $0.05\,\mathrm{s}$.
For residual dynamics learning, a sparse GP model with 200 inducing points is employed to improve computational efficiency \cite{sparsegp}.

To validate the effectiveness of the proposed algorithm, the comparative analysis and ablation study are conducted with the following baselines:
(i) \textbf{Nominal}: both trajectory optimization and tracking are performed using the nominal dynamics without any learned corrections;
(ii) \textbf{GP-Track}~\cite{Control_gp} (\emph{comparison} 1): GP-based residual dynamics is incorporated into the MPC for tracking while the reference trajectory is set to the one optimized under nominal dynamics;
(iii) \textbf{GP-Opt+Track}~\cite{doublegp} (\emph{comparison} 2): GP-based residual dynamics is accounted for in both trajectory optimization and MPC;
(iv) \textbf{Spline-based} (\emph{ablation} 1): the proposed wavelet-based trajectory parameterization is replaced with a conventional cubic spline parameterization, using 10 control points uniformly spaced along the arc-length; and
(v) \textbf{Non-iterative} (\emph{ablation} 2): sufficient data is collected in advance from arbitrary trajectories on the real track, from which residual dynamics is learned and BO is performed without iterative refinement.
All algorithms are evaluated over 10 iterations, while the \emph{Non-iterative} ablation uses a dataset of equivalent size to ensure a fair comparison.

\subsection{Numerical Simulation}
\begin{figure*}[!t]
    \centering
    \includegraphics[width=0.95\linewidth]{./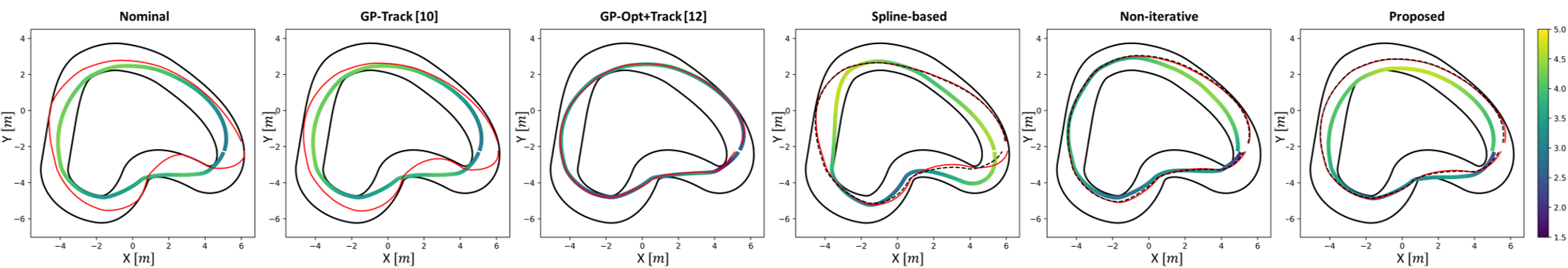}
    \caption{Comparison after 10 iterations for each method. The solid colored line indicates the planned trajectories (colored by speed), the black dotted line shows the trajectories evaluated by the simulation-enabled BO using the learned dynamics, and the red solid line displays the actual trajectories on the real track.}
    \label{fig:result_iteration}
\end{figure*}
To assess performance under diverse conditions, 15 distinct simulation scenarios are considered.
In each scenario, the track geometry is randomly generated, and the true vehicle dynamics is defined by uniformly perturbing the tire characteristics and yaw inertia of the nominal vehicle dynamics within the following ranges: $B^*\!\in\![1.1,1.3]$, $C^*\!\in\![1.3,1.5]$,
$\mu^*\!\in\![0.8,1.2]$, and $I_{z}^*\!\in\![0.014,0.024]~\mathrm{kg\!\cdot\!m^2}$.
For one example scenario, Fig.~\ref{fig:result_iteration} shows the planned (optimized) and actual trajectories at the 10th iteration, and Table~\ref{tab:result_iteration} summarizes the corresponding lap times over iterations.
Here, \(T_{\mathrm{plan}}\) denotes the lap time obtained from trajectory optimization of each methods, \(T_{\mathrm{BO}}\) represents the lap time evaluated by BO, and \(T_{\mathrm{real}}\) is the lap time recorded when tracking the planned trajectory under the true dynamics.

As shown in Fig.~\ref{fig:result_iteration}, the \emph{Nominal} baseline exhibits large tracking errors that eventually lead to collisions.
\emph{GP-Track}~\cite{Control_gp} reduces these errors, but reliable tracking remains challenging.
\emph{GP-Opt+Track}~\cite{doublegp} produces exhibits enhanced tracking by adjusting the planned trajectory through learned dynamics, but lap time improvements remain limited. 
Moreover, a noticeable gap persists between $T_{\mathrm{plan}}$ and $T_{\mathrm{real}}$, since the trajectory optimization does not explicitly account for tracking performance.
In contrast, the proposed method achieves the fastest $T_{\mathrm{real}}$ among all baselines.
This is mainly attributed to the simulation-enabled BO explicitly including the learned dynamics, allowing the trajectory optimization to directly target the reduction of the real lap time.
As the learned dynamics becomes more accurate over iterations, $T_{\mathrm{BO}}$ increasingly aligns with $T_{\mathrm{real}}$, enabling consistent performance improvements.

The assets of the proposed framework are further examined through the ablation study.
The \emph{Spline-based} ablation exhibits degraded performance near high-curvature apex regions, highlighting the advantage of the wavelet representation in capturing sharp curvature despite a limited number of parameters.
The \emph{Non-iterative} ablation shows limited improvement, as learning dynamics across the entire driving envelope leads to insufficient accuracy of $\hat{J}$ in the BO-explored regions.
This underscores the necessity of iterative, track-centric learning, which progressively improves dynamics accuracy along the specific trajectory being optimized.

To evaluate performance across all 15 scenarios, we further quantify the lap time improvement by:
\begin{equation*}
\mathrm{Improvement}~[\%] =
\frac{T_{\mathrm{nom}} - T_{\mathrm{method}}}{T_{\mathrm{nom}}} \times 100
\end{equation*}
where $T_{\mathrm{nom}}$ and $T_{\mathrm{method}}$ respectively denote the lap times obtained by the \emph{Nominal} baseline and each method.
As shown in Fig.~\ref{fig:result_average}, the proposed framework achieves the most lap time improvement across scenarios, with an average gain of $20.7\%$ over the \emph{Nominal} baseline, consistently outperforming all the other baselines.

\begin{table}[!t]
\centering
\caption{Planned, Expected, and Real Lap Time over Iterations}
\label{tab:result_iteration}
\resizebox{0.95\columnwidth}{!}{%
\begin{tabular}{ll|ccccc}
\multicolumn{2}{l|}{}                                                                                      & \multicolumn{1}{c|}{$j=1$}    & \multicolumn{1}{c|}{$j=3$}    & \multicolumn{1}{c|}{$j=5$}    & \multicolumn{1}{c|}{$j=7$}   & $j=10$           \\ \hline
\multicolumn{1}{l|}{\multirow{2}{*}{Nominal}}                                                 & $T_{plan}$ & \multicolumn{5}{c}{6.93s}                                                                                                             \\ \cline{2-7} 
\multicolumn{1}{l|}{}                                                                         & $T_{real}$ & \multicolumn{5}{c}{\textbf{11.2s}}                                                                                                    \\ \hline
\multicolumn{1}{l|}{\multirow{2}{*}{GP-Track \cite{Control_gp}}}                                                & $T_{plan}$ & \multicolumn{5}{c}{6.93s}                                                                                                             \\ \cline{2-7} 
\multicolumn{1}{l|}{}                                                                         & $T_{real}$ & \multicolumn{1}{c|}{10.25s} & \multicolumn{1}{c|}{10.05s} & \multicolumn{1}{c|}{9.75s}  & \multicolumn{1}{c|}{9.95s} & \textbf{9.8s}  \\ \hline
\multicolumn{1}{l|}{\multirow{2}{*}{\begin{tabular}[c]{@{}l@{}}GP-Opt\\ +Track \cite{doublegp} \end{tabular}}} & $T_{plan}$ & \multicolumn{1}{c|}{8.96s}  & \multicolumn{1}{c|}{8.95s}  & \multicolumn{1}{c|}{8.94s}  & \multicolumn{1}{c|}{8.94s} & 8.95s          \\ \cline{2-7} 
\multicolumn{1}{l|}{}                                                                         & $T_{real}$ & \multicolumn{1}{c|}{9.8s}   & \multicolumn{1}{c|}{9.7s}   & \multicolumn{1}{c|}{9.55s}  & \multicolumn{1}{c|}{9.6s}  & \textbf{9.55s} \\ \hline
\multicolumn{1}{l|}{\multirow{2}{*}{Spline-based}}                                            & $T_{BO}$ & \multicolumn{1}{c|}{10.0s}  & \multicolumn{1}{c|}{9.95s}  & \multicolumn{1}{c|}{9.7s}   & \multicolumn{1}{c|}{9.8s}  & 9.55s          \\ \cline{2-7} 
\multicolumn{1}{l|}{}                                                                         & $T_{real}$ & \multicolumn{1}{c|}{10.6s}  & \multicolumn{1}{c|}{10.3s}  & \multicolumn{1}{c|}{10.15s} & \multicolumn{1}{c|}{10.0s} & \textbf{9.7s}  \\ \hline
\multicolumn{1}{l|}{\multirow{2}{*}{Non-iterative}}                                           & $T_{BO}$   & \multicolumn{5}{c}{9.25s}                                                                                                             \\ \cline{2-7} 
\multicolumn{1}{l|}{}                                                                         & $T_{real}$ & \multicolumn{5}{c}{\textbf{9.55s}}                                                                                                    \\ \hline
\multicolumn{1}{l|}{\multirow{2}{*}{Proposed}}                                                & $T_{BO}$   & \multicolumn{1}{c|}{8.75s}  & \multicolumn{1}{c|}{8.75s}  & \multicolumn{1}{c|}{9.0s}   & \multicolumn{1}{c|}{8.8s}  & 8.75s          \\ \cline{2-7} 
\multicolumn{1}{l|}{}                                                                         & $T_{real}$ & \multicolumn{1}{c|}{9.5s}   & \multicolumn{1}{c|}{9.1s}   & \multicolumn{1}{c|}{9.15s}  & \multicolumn{1}{c|}{8.95s} & \textbf{8.75s} \\ \hline
\end{tabular}%
}
\end{table}

\begin{figure}[!t]
    \centering
    \includegraphics[width=0.75\linewidth]{./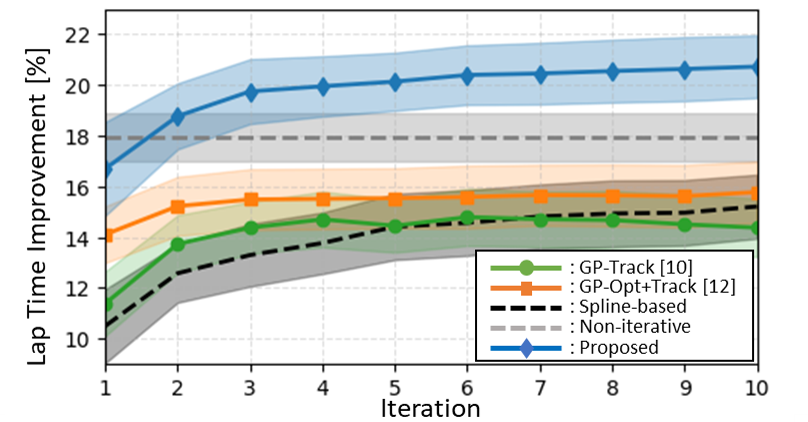}
    \caption{Average lap time improvement across randomly generated scenarios.}
    \label{fig:result_average}
\end{figure}

\subsection{Hardware Experiment}
\begin{figure}[!t]
    \centering
    \includegraphics[width=0.42\textwidth]{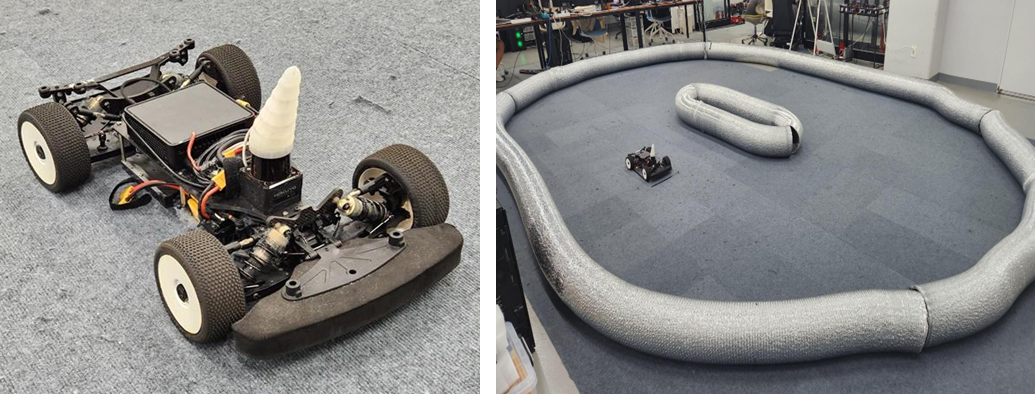}
    \caption{Experimental setup: (\emph{left}) the 1/10-scale vehicle platform and (\emph{right}) the indoor track.}
    \label{fig:experiment_setup}
\end{figure}

The proposed framework is further validated on a 1/10-scale autonomous racing platform, as shown in Fig.~\ref{fig:experiment_setup}.
The platform is equipped with an onboard Intel NUC computer running all algorithms within the ROS framework, including indoor localization and state estimation.
To evaluate the effectiveness of the simulation-enabled trajectory optimization, we consider two MPC tracking controllers: one with well-tuned cost weights and another with poorly-tuned weights that result in degraded tracking performance.
At each iteration, the optimized trajectory is deployed and evaluated through three repeated trials.
As shown in Fig.~\ref{fig:experiment_result}, the proposed framework achieves the lowest lap times across successive iterations under both controller settings.
Notably, similar lap times are observed across the two controller settings, demonstrating great compatibility of the proposed framework with different tracking controllers.

\begin{figure} [!t]
    \centering
    \includegraphics[width=0.47\textwidth]{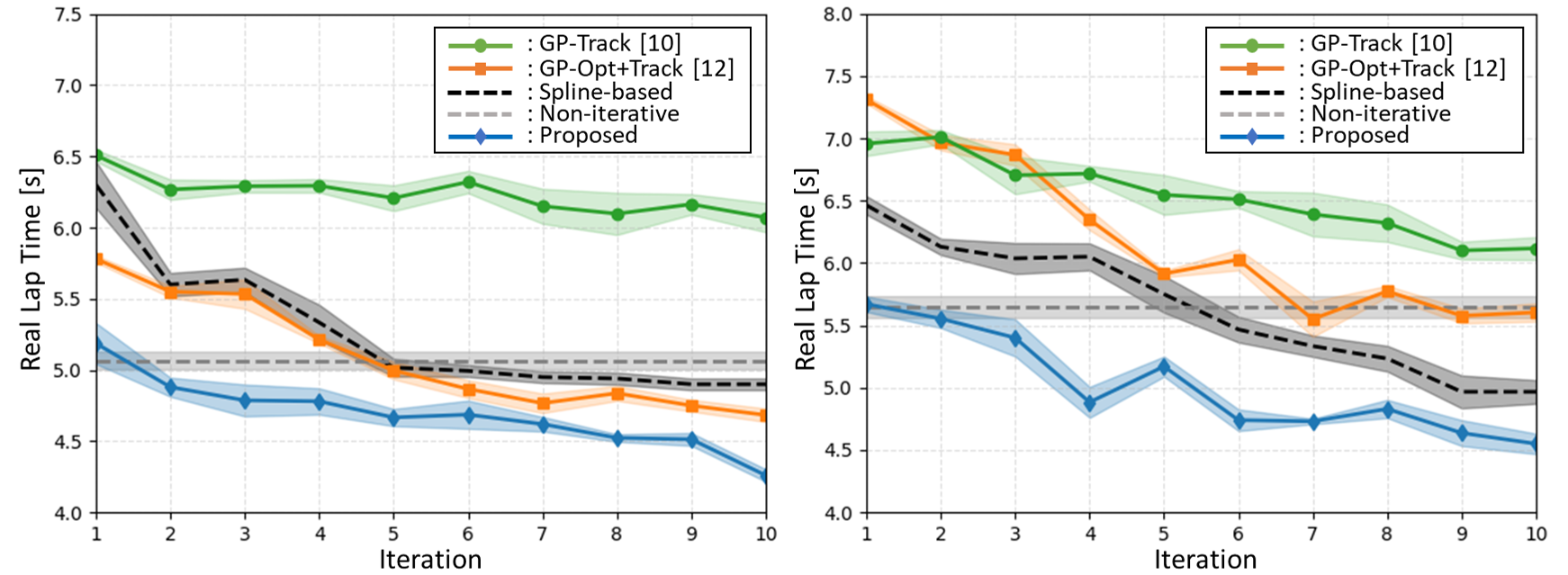}
    \caption{Experimental lap time results over iterations for different controller settings: (\emph{left}) the well-tuned and (\emph{right}) the poorly-tuned.}
\label{fig:experiment_result}
\end{figure}

\section{Conclusion}
This paper proposes a track-centric iterative learning framework for global trajectory optimization, particularly aiming at autonomous racing.
We first introduce a wavelet-based trajectory parameterization to represent full-horizon racing trajectories in a track-agnostic and low-dimensional space.
This space is then explored using BO, where candidate trajectories are evaluated by running simulations with learned vehicle dynamics.
By embedding the simulation-enabled BO into an iterative learning loop, the optimized trajectory is deployed on the real track to collect data, which is used to learn the vehicle dynamics. 
Then, the dynamics is updated over the iterations, progressively improving the simulation fidelity, thereby refining the trajectory.
The effectiveness of the proposed framework is demonstrated through extensive simulations and real-world experiments on a 1/10-scale autonomous racing platform, where it consistently achieves faster lap times than existing methods.
Future work will further investigate spatially varying characteristics of vehicle dynamics along the track to further enhance the performance of track-centric trajectory refinement.

\appendix
\subsection{Proof of Proposition \ref{prop:surrogate_fidelity}}
\label{appendix:proof_prop1}
The goal of this proof is to bound the discrepancy between the true lap time $J_{g^*}(\theta)$ and the BO-evaluated lap time $J_{g^j}(\theta)$ in terms of the maximum error $\epsilon^j$ of the model mismatch between true dynamics and learned dynamics.
To this end, we analyze how the state deviation between the true and simulated trajectories propagates over the discretized arc-length indices $k = 0, \dots, N_s-1$.
Let $\{\mathbf{x}_k, \mathbf{u}_k\}_{k=0}^{N_s-1}$ denote the true trajectory generated by $g^{*}$, and $\{\hat{\mathbf{x}}^j_k, \hat{\mathbf{u}}^j_k\}_{k=0}^{N_s-1}$ denote the simulated trajectory generated by $g^{j}$.
For notational convenience and without loss of generality, we redefine the residual dynamics term \eqref{eq.gp_model} as 
$g(\mathbf{x}, \mathbf{u}) := B_g g(\mathbf{z})$.
The corresponding discrete-space dynamics are then given by:
\begin{subequations}
\begin{align}
    \mathbf{x}_{k+1} &= \mathbf{x}_{k} + [f(\mathbf{x}_k, \mathbf{u}_k) + g^*(\mathbf{x}_k, \mathbf{u}_k)]\Delta s \label{eq.discrete-space dynamics1}\\
    \hat{\mathbf{x}}^j_{k+1} &= \hat{\mathbf{x}}^j_{k} + [f(\hat{\mathbf{x}}^j_k, \hat{\mathbf{u}}^j_k) + g^{j}(\hat{\mathbf{x}}^j_k, \hat{\mathbf{u}}^j_k)] \Delta s. \label{eq.discrete-space dynamics2}
\end{align}
\end{subequations}
 The state deviation is defined as $\delta \mathbf{x}^j_k := \mathbf{x}_k - \hat{\mathbf{x}}^j_k$. 
 Subtracting \eqref{eq.discrete-space dynamics1} from \eqref{eq.discrete-space dynamics2} gives:
\begin{equation} \label{eq.deviation_dynamics}
\begin{aligned}
    \delta \mathbf{x}^j_{k+1} = \delta \mathbf{x}^j_{k} + & \ [f(\mathbf{x}_k, \mathbf{u}_k) - f(\hat{\mathbf{x}}^j_k, \hat{\mathbf{u}}^j_k)]\Delta s \\
                     & + [g^*(\mathbf{x}_k, \mathbf{u}_k) - g^j(\hat{\mathbf{x}}^j_k, \hat{\mathbf{u}}^j_k)]\Delta s.
\end{aligned}
\end{equation}
With the dynamics error defined as $e_k^{j} := g^*(\hat{\mathbf{x}}^j_k, \hat{\mathbf{u}}^j_k) - g^{j}(\hat{\mathbf{x}}^j_k, \hat{\mathbf{u}}^j_k)$, a first-order Taylor expansion of \eqref{eq.deviation_dynamics} around $(\hat{\mathbf{x}}^j_k, \hat{\mathbf{u}}^j_k)$ yields:
\begin{equation}
\label{eq.linearized_dev}
\delta \mathbf{x}^j_{k+1}
\approx
\bigl( I + \Delta s\, M^j_k \bigr)\, \delta \mathbf{x}^j_k
+ \Delta s\, e_k^j
\end{equation}
where $M^j_k$ denotes the closed-loop Jacobian of the deviation dynamics.
Unrolling \eqref{eq.linearized_dev} from $k=0$, with $\delta \mathbf{x}^j_0 = 0$, gives:
\begin{equation}
    \delta \mathbf{x}^j_{k} \approx \Delta s \sum_{r=0}^{k-1} \Phi^j(k, r+1) e_r^j
\end{equation}
where $\Phi^j(k, r+1) = \prod_{m=r+1}^{k-1} (I + \Delta s\, M_m^j)$ is the state deviation transition matrix. 
Since all trajectories lie in the compact set $\mathcal{X}$, and true dynamics and the feedback policy are continuously differentiable,
the associated Jacobians are uniformly bounded.
Hence, under Assumption \ref{assum:model_convergence}, there exists a finite constant $G_k > 0$ such that:
\begin{equation}
\label{eq.state_bound}
    \|\delta \mathbf{x}^{j}_k\|
    \le 
    \Delta s \sum_{r=0}^{k-1} \|\Phi^j(k, r+1)\|\, \|e_r^{j}\|
    \le G_k\, \epsilon^{j}.
\end{equation}

Next, we relate the state deviation bound in \eqref{eq.state_bound} to the lap time prediction error.
Recalling \eqref{eq.continuous_dyn_a}, we write the longitudinal progress rate as
$\dot{s} = h(\mathbf{x})$.
According to the spatial formulation in Sec.~\ref{section:Mintime} and the positivity of the longitudinal speed, there exists a constant $v_{\min}>0$ such that:
\begin{align}
\label{eq.laptime_diff_bound}
    |J_{g^{j}}(\theta) - J_{g^*}(\theta)|
    &= \left| 
    \sum_{k=0}^{N_s-1} \frac{\Delta s}{\hat{\dot{s}}^j_k}
    - \sum_{k=0}^{N_s-1} \frac{\Delta s}{\dot{s}_k} 
    \right| \nonumber \\
    &\le \frac{\Delta s}{v_{\min}^2} 
    \sum_{k=0}^{N_s-1} 
    |\dot{s}_k - \hat{\dot{s}}^j_k|.
\end{align}
Since $h(\mathbf{x})$ is continuously differentiable, the rate deviation can be locally linearized and bounded based on \eqref{eq.state_bound} as:
\begin{equation}
\label{eq.lipschitz_rate}
\begin{aligned}
    |\dot{s}_k - \hat{\dot{s}}^j_k|
    &= |h(\mathbf{x}_k) - h(\hat{\mathbf{x}}^j_k)|
     \approx \Big\|\tfrac{\partial h}{\partial \mathbf{x}}\Big|_{\hat{\mathbf{x}}^j_k} \delta \mathbf{x}_k^{j}\Big\| \\
    &\le L_k \|\delta \mathbf{x}_k^{j}\|
    \le L_k G_k\, \epsilon^{j}.
\end{aligned}
\end{equation}
Finally, substituting \eqref{eq.lipschitz_rate} into \eqref{eq.laptime_diff_bound} yields:
\begin{equation*}
\label{eq.final_bound}
    |J_{g^{j}}(\theta) - J_{g^*}(\theta)|
    \le 
    \frac{\Delta s}{v_{\min}^2} 
    \sum_{k=0}^{N_s-1} L_k G_k\, \epsilon^{j} \nonumber \\
    = C^{j}\, \epsilon^{j}.
\end{equation*}
This completes the proof. 
\hfill $\square$

\bibliographystyle{IEEEtran}
\bibliography{reference}

\end{document}